\documentclass[oneside,letterpaper]{article}

\usepackage{color}
\definecolor{mydarkblue}{rgb}{0,0.03,0.6}
\definecolor{mydarkgreen}{rgb}{0,0.6,0.3}
\usepackage[colorlinks=true, plainpages=false, pdfpagelabels=true, pageanchor=false,hypertexnames=false,linkcolor=mydarkblue,citecolor=mydarkgreen,
pdfauthor={}]{hyperref}

\usepackage{url}
\usepackage{amsmath}
\usepackage{amsthm}
\usepackage{amsfonts}
\usepackage[utf8]{inputenc}
\usepackage{array}
\usepackage[ruled,vlined,linesnumbered]{algorithm2e}
\usepackage{graphicx}
\usepackage{booktabs} 
\usepackage{geometry}
\usepackage{authblk}




\newcommand{\R}{\mathbb{R}}
\newcommand{\E}{\mathbb{E}}
\newcommand{\LS}{\text{LS}}   
\newcommand{\CLS}{\text{CLS}} 
\newcommand{\PCLS}{\text{PCLS}}
\newcommand{\RPC}{\text{RPC}} 
\newcommand{\order}{\mathcal{O}}
\DeclareMathOperator*{\argmin}{argmin}

\newcommand{\tp}{^\mathsf{T}}

\newcommand{\defeq}{\stackrel{\text{\tiny def}}{=}}  
\newtheorem{defn}{Definition}
\newtheorem{thm}{Theorem}
\newtheorem{lem}{Lemma}
\newcommand{\eye}{I}  
\newcommand{\opt}{^{\star}}

\setlength{\parskip}{0.5cm plus4mm minus3mm}
\setlength{\parindent}{0cm} 

\title{Robust Partially-Compressed Least-Squares}

\author[*]{Stephen Becker}
\author[+]{Ban Kawas}
\author[+]{Marek Petrik}
\author[+]{Karthikeyan N. Ramamurthy}

\affil[*]{University of Colorado, Boulder}
\affil[+]{IBM T.J. Watson Research Center}

\begin{document}

\maketitle

\begin{abstract}%
Randomized matrix compression techniques, such as the Johnson-Lindenstrauss transform, have emerged as an effective and practical way for solving large-scale problems efficiently. With a focus on computational efficiency, however, forsaking solutions quality and accuracy becomes the trade-off. In this paper, we investigate compressed least-squares problems and propose new models and algorithms that address the issue of error and noise introduced by compression. While maintaining computational efficiency, our models provide robust solutions that are more accurate---relative to solutions of uncompressed least-squares---than those of classical compressed variants. We introduce tools from robust optimization together with a form of partial compression to improve the error-time trade-offs of compressed least-squares solvers. We develop an efficient solution algorithm for our \emph{Robust Partially-Compressed} (RPC) model based on a reduction to a one-dimensional search. We also derive the first approximation error bounds for \emph{Partially-Compressed} least-squares solutions. Empirical results comparing numerous alternatives suggest that robust and partially compressed solutions are effectively insulated against aggressive randomized transforms.
\end{abstract}

\section{Introduction}

Random projection is a simple and effective dimensionality reduction technique that enables significant speedups in solving large-scale machine learning problems~\cite{dasgupta2000experiments,mahoney2011randomized}. It has been successfully used, for example, in classification \cite{Pilanci-Wainwright, zhang2012recovering}, clustering \cite{boutsidis2010random,fern2003random,urruty2007clustering}, and least-squares problems \cite{drineas2011faster,Pilanci-Wainwright}.  The focus of this paper will be on the latter. We consider the following canonical least-squares estimator, with $A\in\R^{M\times N}$: 
\begin{equation}\label{eq:x-LS} 
x_\LS \defeq \argmin_x \; \frac{1}{2} \, \|Ax-b\|^2 = (A\tp A)^{-1} \,  A\tp \, b
\end{equation}
where $\|\cdot\|$, for vectors, denotes the Euclidean norm throughout the paper, and $A$ has the full column rank. We assume that $M \gg N$ and refer to $x_\LS$ as the solution to the \emph{uncompressed} problem.

When $M$ is very large, solving the least-squares problem in \eqref{eq:x-LS} can be time-consuming and computationally expensive. To gain the necessary speedups, random projections are used. The standard approach to doing so proceeds as follows~\cite{drineas2011faster}. First, we construct a compression matrix $\Phi \in \R^{m \times M}$ from a random distribution such that $\E \left[ \Phi\tp\Phi \right] = \eye$ and $m \ll M$. Then, we solve the \emph{fully compressed} problem:
\begin{equation}\label{eq:x-CLS}
x_\CLS \defeq \argmin_x  \; \frac{1}{2} \, \|\Phi \, (A x-b)\|^2
\end{equation}
Numerous efficient methods for constructing the compression matrix $\Phi$ have been developed; surveys are provided in \cite{boutsidis2013near,drineas2011faster, mahoney2011randomized}. We describe and use several common methods for constructing $\Phi$ in Section~\ref{sec:experiments}.


When $m$ in $\Phi$ is small, the fully compressed least-squares problem in \eqref{eq:x-CLS} is much easier to solve than the uncompressed least-squares in \eqref{eq:x-LS}. However, as demonstrated in our numerical results, compression can introduce significant errors to the solution $x_\CLS$ when compared to the uncompressed solution, $x_\LS$ (see Fig.~\ref{fig:NHIS}, explained in details in Section~\ref{sec:experiments}) and one is forced to consider the trade-off between accuracy and efficiency. As our main contribution, we propose and analyze \emph{two} new models that address this issue and provide a desirable trade-off; enabling robust solutions while preserving small computational complexity. 

\begin{figure}
\centering
\includegraphics[width=0.65\linewidth]{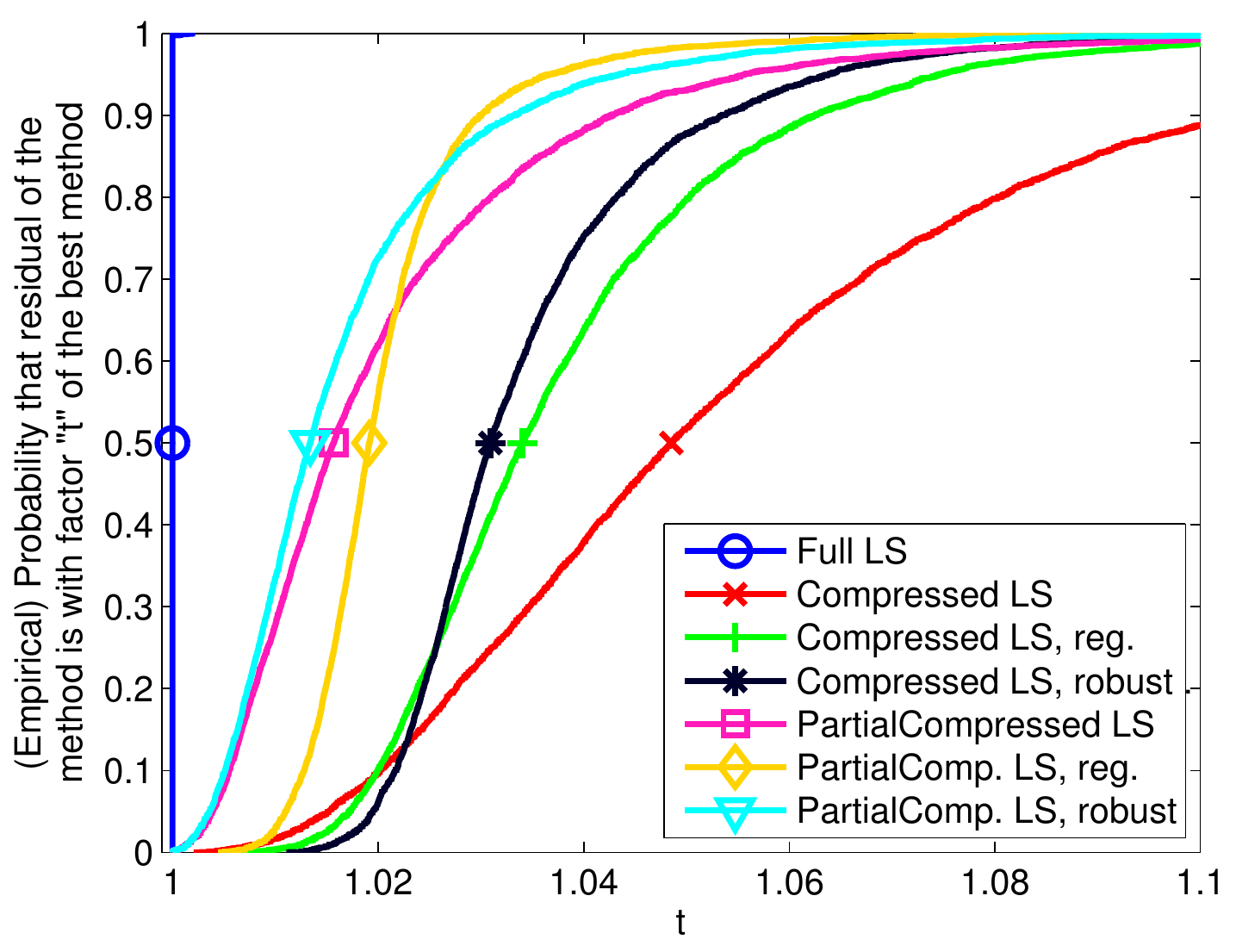}
\caption{Testing the data residual of various compressed least-squares methods.} \label{fig:NHIS}
\end{figure}

Demonstrated in Fig.~\ref{fig:NHIS}---which is based on a real dataset from the National Health Interview Survey from 1992---by simply applying a form of regularization or by applying robust optimization techniques to explicitly model errors introduced by compression, we already observe increased accuracy of solutions compared to the classical compressed solution, $x_\CLS$. Inspired by these results, our main model alleviates the effects of compression on solutions' quality---while maintaining the computational benefits of compressed least-squares---by applying the compression on \emph{only} the computationally intensive terms of ordinary least-squares. Our new model is the following \emph{partially-compressed} least-squares estimator:
\begin{equation}\label{eq:PCLS}
\min_x \frac{1}{2} \, \|\Phi\,A \, x\|^2 - b\tp A \,  x
\end{equation}
and its solution is given by:
\begin{equation} \label{eq:x-PCLS} 
  x_\PCLS \defeq (A\tp\Phi\tp\Phi A)^{-1}A\tp b~.
\end{equation}
Again, notice that only the computationally expensive parts of the ordinary least-squares estimator, which involve inverting $A\tp A$, are compressed.  Also notice, in comparison, that the objective function of the fully compressed least-squares estimator is $\frac{1}{2}\, \|\Phi\,A \, x\|^2 - b\tp \, \Phi\tp\Phi \, A \, x$. 

While not the focus of this paper---since our goal here is to introduce our new estimator in \eqref{eq:PCLS}---it is important to note that the prediction error $\|Ax_\PCLS - Ax_\LS\|$ of the partially compressed solution, $x_\PCLS$, is not always smaller than the error of the fully compressed one, $x_\CLS$. Section~\ref{sec:errors} describes when and why this is the case. Currently, we are investigating this topic in more details and have derived a new model that combines both solutions and outperforms both individually.

Observing that our new estimator still introduces prediction errors when compared to the original uncompressed solution $x_\LS$, we have derived our second model, the \emph{robust partially-compressed} least-squares estimator~(RPC). Described in Section~\ref{sec:partial_definition}, RPC explicitly  models errors introduced by compression and is closely related to robust least-squares regression~\cite{elGhaoui-lebret-1997}.  Leveraging robust optimization techniques makes it possible to reduce the solution error without excessively increasing computational complexity and is a data-driven approach that has been widely used in the last two decades~\cite{ben2009robust}. In our numerical results, we have observed a similar effect to that when applying robust optimization to the fully compressed least-squares solution; increased accuracy and reduction in error. 

While RPC can be formulated as a second-order conic program (SOCP), generic off-the-shelf SOCP solvers may be slow for large problems. Therefore, as one of our contributions, we have developed a fast algorithm based on a \emph{one-dimensional search} that can be faster than CPLEX by over an order of magnitude. Using this fast algorithm, described in Section~\ref{sec:efficient}, the RPC model is asymptotically just as efficient as the non-robust model. Table~\ref{tab:context_results} puts our results in context of prior related work and all model variants we consider are presented in Table~\ref{tb:methods}.

\begin{table*}
\centering
\begin{tabular}{l|lll}
\toprule
 & Least Squares & Ridge Regression & Robust Least Squares \\ 
\midrule
No Compression & many & e.g., \cite{Boyd2004} & \cite{elGhaoui-lebret-1997} \\ 
Partial Compression & \textbf{new}: \eqref{eq:PCLS} & \textbf{new}: \eqref{eq:partial_ridge} & \textbf{new}: \eqref{eq:RPC} \\ 
Full Compression & e.g., \cite{drineas2011faster} & e.g., \cite{Boyd2004} & \textbf{new} (but algo.\ via El Ghaoui) \\ 
\bottomrule
\end{tabular} 
\caption{Our work in context of previous results. The equation numbers point to the objective functions for each method.} \label{tab:context_results}
\end{table*}

Our empirical results, discussed in Section~\ref{sec:experiments}, show that both \emph{partially-compressed} and \emph{robust partially-compressed} solutions can outperform models that use full compression in terms of quality of solutions. We also show that compressed variants are more computationally efficient than ordinary least-squares, especially as dimensions grow.


\newcommand{\citeasnoun}[1]{\citeauthor{#1} \citeyear{#1}}

\newcommand{\subjectto}{\mbox{s.t.} &}
\newcommand{\stc}{\\[1ex] \subjectto}
\newcommand{\minimize}[1]{\min_{#1} &}
\newcommand{\maximize}[1]{\max_{#1} &}
\newenvironment{mprog}{\begin{equation}\begin{array}{>{\displaystyle}l>{\displaystyle}l>{\displaystyle}l}}{\end{array}\end{equation}}
\newenvironment{mprog*}{\begin{equation*}\begin{array}{>{\displaystyle}l>{\displaystyle}l>{\displaystyle}l}}{\end{array}
\end{equation*}}
\newcommand{\cs}{\\[1ex] & }
\renewcommand{\t}{^{T}}

\section{Robust Partially-Compressed Least-Squares} \label{sec:partial_definition}

In this section, we describe how to incorporate robustness in our \emph{Partially-Compressed} least-squares model~\eqref{eq:PCLS}. As described above, our objective is to enhance solutions' quality and increase robustness against noise and errors introduced by compression. One way of improving robustness is to use ridge regression, which when applied to our model~\eqref{eq:PCLS}, we obtain the following formulation:
\begin{equation} \label{eq:partial_ridge}
\min_x \frac{1}{2} \, \|\Phi\,A \, x\|^2 - b\tp A \, x + \mu \| x \|^2,
\end{equation}
for some regularization parameter $\mu$. One caveat of using ridge regression is that it does not capture the error structure introduced by compression, which differs significantly from that present in the data of the original uncompressed ordinary least-squares problem. Robust optimization~\cite{ben2009robust}, however, enables us to do exactly that and allows us to explicity model the error structure. The following is our \emph{Robust Partially-Compressed}~(RPC) estimator:
\begin{equation} \label{eq:RPC}
    x_\RPC = \argmin_x \; \max_{\|\Delta P\|_F \le \rho}\; \frac{1}{2}\|(P+\Delta P)x\|^2 - b\tp Ax
\end{equation}
where $P=\Phi A$ and $\Delta P$ is a matrix variable of size $m\times N$. The general formulation of the problem allows for a more targeted model of the noise that captures the fact that $\|\Phi A x\|$ is a random variable while $b\tp A x$ is not. That is, the uncertainty is restricted to the data matrix $P$ alone since the partial compression does not introduce any noise in the right-hand side.

Without compression, it is worth noting that applying robust optimization techniques to the \emph{ordinary} least-squares problem yields the same solution as applying ridge regression with a data-dependent parameter~\cite{elGhaoui-lebret-1997}. As we will show, this is not the case in our setting, as robust partially-compressed least-squares does not reduce to ridge regression. Empirically, we have also seen that robust partially-compressed least-squares is more likely to yield better results than ridge regression and has more intuition built behind it. 

All of the above, motivated us to focus more on our RPC~\eqref{eq:RPC} model and to derive a corresponding efficient solution algorithm. In what follows, we show that RPC~\eqref{eq:RPC} can be formulated as a second-order conic program~(SOCP) that can be solved via off-the-shelf solvers such as CPLEX, and in Section~\ref{sec:efficient}, we propose an alternative way to solving RPC and derive an efficient solution algorithm based on a reduction to one-dimensional search.
  
\subsection{SOCP Formulation of RPC}
While the inner optimization in \eqref{eq:RPC} is a non-convex optimization problem, we show in the following lemma that there exists a closed-form solution.
\begin{lem} \label{lem:inner_optimization}
The inner maximization in \eqref{eq:RPC} can be reformulated for any $x$ as: 
\begin{equation} \label{eq:wc-resid}
\max_{\|\Delta P\|\le \rho} \; \|(P+\Delta P)x\|^2  = \left( \|Px\|+\rho\|x\|\right)^2~.
\end{equation}  
In addition, the maximal value is achieved for $\Delta P = \frac{\rho}{\|Px\|\|x\|}P x x\tp$.
\end{lem}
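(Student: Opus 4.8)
The plan is to collapse the matrix optimization over $\Delta P$ into a one-dimensional geometric problem by tracking only the vector $v \defeq \Delta P\,x$ that actually enters the objective. Fixing $x \neq 0$ (the case $x = 0$ being trivial, with both sides equal to zero), I would first observe that the objective $\|(P+\Delta P)x\|^2 = \|Px + v\|^2$ depends on $\Delta P$ solely through $v$. Hence the problem decouples into an inner characterization of the feasible set $\mathcal{V} \defeq \{\Delta P\,x : \|\Delta P\| \le \rho\}$, followed by a plain maximization over $\mathcal{V}$.

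The central step is to show that $\mathcal{V}$ is exactly the Euclidean ball $\{v : \|v\| \le \rho\|x\|\}$. The containment $\subseteq$ follows from submultiplicativity, $\|\Delta P\,x\| \le \|\Delta P\|_2\,\|x\| \le \rho\|x\|$ (using $\|\Delta P\|_2 \le \|\Delta P\|_F$ when the constraint is Frobenius). The reverse containment is the point that makes the lemma work: given any target $v$ with $\|v\| \le \rho\|x\|$, the rank-one matrix $\Delta P = \frac{1}{\|x\|^2}\, v\, x\tp$ satisfies $\Delta P\,x = v$ and has norm $\|v\|/\|x\| \le \rho$. I would note here that because this witness is rank one, its Frobenius and spectral norms coincide, so the argument is valid verbatim for either reading of $\|\cdot\|$ and the resulting maximizer has norm exactly $\rho$.

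With the feasible set identified, the remaining problem $\max_{\|v\| \le \rho\|x\|}\|Px + v\|^2$ is settled by the triangle inequality: $\|Px + v\| \le \|Px\| + \|v\| \le \|Px\| + \rho\|x\|$, with equality precisely when $v$ is a nonnegative multiple of $Px$ of length $\rho\|x\|$, i.e.\ $v^\star = \rho\|x\|\, Px/\|Px\|$. Squaring yields the claimed value $\left(\|Px\| + \rho\|x\|\right)^2$. Back-substituting through the witness, $\Delta P = \frac{1}{\|x\|^2}\, v^\star x\tp = \frac{\rho}{\|Px\|\|x\|}\, Px\, x\tp$, recovers the stated maximizer.

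The main obstacle is conceptual rather than computational: the inner problem is non-convex (a convex objective maximized over a convex set), so naive stationarity reasoning is unreliable. The change of variables $v = \Delta P\,x$ is exactly the device that sidesteps this—it reduces the genuinely non-convex matrix problem to maximizing distance over a ball, where the extreme point is immediate. The one remaining care point is the degenerate case $Px = 0$, where the value formula still holds (both sides equal $(\rho\|x\|)^2$) but the displayed maximizer must be read as any $\Delta P$ sending $x$ to a vector of length $\rho\|x\|$, since the explicit expression divides by $\|Px\|$.
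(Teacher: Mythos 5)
Your proof is correct and takes essentially the same route as the paper's: the triangle inequality supplies the upper bound, and the identical rank-one witness $\Delta P = \frac{\rho}{\|Px\|\,\|x\|}\,P x x\tp$ shows it is attained, with your change of variables $v = \Delta P\,x$ and the characterization of its feasible set being a systematic repackaging of that same argument. The only substantive addition is your explicit handling of the degenerate cases $x=0$ and $Px=0$, which the paper's proof passes over silently.
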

\begin{proof}
The objective function can be upper-bounded using the triangle inequality: 
\begin{align}
\nonumber
 \max_{\|\Delta P\|\le \rho}\|(P+\Delta P)x\|^2  &\le \max_{\|\Delta P\|\le \rho} \left( \|Px\|+\|\Delta Px\|\right)^2 \\
 \nonumber
        &\le \left( \|Px\|+\rho\|x\|\right)^2 ~.
\end{align}
To show that this bound is tight, consider $\overline{\Delta P} = \frac{\rho}{\|Px\|\|x\|}P x x\tp$. It can be readily seen that $\|\overline{\Delta P}\|_F = \rho$.  Then by algebraic manipulation:
\begin{align}
\nonumber
    \max_{\|\Delta P\|\le \rho}\|(P+\Delta P)x\|^2  &\ge \|(P+ \overline{\Delta P}) \, x\|^2  \\
 \nonumber
	&=  \left( \|Px\|+\rho\|x\|\right) ^2~.
\end{align}
\end{proof}

Using Lemma~\ref{lem:inner_optimization}, the robust partially-compressed estimator $x_\RPC$ is the optimal solution to:
\begin{equation} \label{eq:rpc_nice}
\min_x \; \frac{1}{2} \left( \|Px\| + \rho \|x\| \right)^2 - b\tp A x~.
\end{equation}  

We now analyze the structure of the optimal solution and point to connections and differences in comparison to results from ridge regression.
\begin{thm} \label{thm:structure}
The optimal solution $x_\RPC$ to \eqref{eq:rpc_nice} must satisfy:
\begin{equation} \label{eq:xRPC}
    x_\RPC = \frac{1}{\alpha+ \rho \, \beta} (\alpha^{-1} \, P\tp \, P + \rho \, \beta^{-1} \, \eye )^{-1} A\tp \, b~,
\end{equation}
such that $\alpha=\|P x_\RPC\|$ and $\beta=\|x_\RPC\|$, or $x_\RPC=0$ if $A\tp b=0$.
\end{thm}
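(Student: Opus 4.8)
The plan is to characterize $x_\RPC$ through the first-order optimality conditions of the reformulated problem \eqref{eq:rpc_nice}, which Lemma~\ref{lem:inner_optimization} has already shown is equivalent to the original min-max estimator. First I would verify that the objective
\[
f(x) \defeq \tfrac{1}{2}\left(\|Px\| + \rho\,\|x\|\right)^2 - b\tp A\,x
\]
is convex and coercive: the map $x \mapsto \|Px\| + \rho\|x\|$ is a nonnegative convex function (a sum of seminorms), composing it with the nondecreasing convex square preserves convexity, and the remaining term is linear; coercivity follows since the $\tfrac{1}{2}\rho^2\|x\|^2$ component dominates the linear term as $\|x\|\to\infty$. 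Hence a global minimizer exists, and because $f$ is convex it is characterized exactly by the vanishing of a (sub)gradient. The strategy is therefore to compute that gradient and rearrange.

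Next I would dispose of the degenerate case and locate where $f$ is smooth. When $A\tp b = 0$ the linear term vanishes identically, so $f(x) = \tfrac{1}{2}(\|Px\| + \rho\|x\|)^2 \ge 0 = f(0)$, giving $x_\RPC = 0$ immediately. For $A\tp b \neq 0$ I would argue the minimizer is not the origin: since $\|Px\| + \rho\|x\| = O(\|x\|)$ near $0$, the squared term is $O(\|x\|^2)$ and is thus differentiable at $0$ with zero gradient, so the subdifferential $\partial f(0) = \{-A\tp b\}$ does not contain $0$. The optimum therefore occurs at some $x_\RPC \neq 0$, where $\beta \defeq \|x_\RPC\| > 0$; and because the definition of $x_\PCLS$ in \eqref{eq:x-PCLS} already presumes $P\tp P = A\tp\Phi\tp\Phi A$ invertible, $P = \Phi A$ has full column rank, so $\alpha \defeq \|Px_\RPC\| > 0$ as well. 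At this point $f$ is differentiable.

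At such a point I would compute the gradient by the chain rule, using $\nabla\|Px\| = \alpha^{-1}P\tp P\,x$ and $\nabla\|x\| = \beta^{-1}x$ evaluated at $x_\RPC$, to obtain
\[
\nabla f(x_\RPC) = (\alpha + \rho\beta)\left(\alpha^{-1}P\tp P + \rho\,\beta^{-1}\eye\right)x_\RPC - A\tp b = 0.
\]
The coefficient matrix $\alpha^{-1}P\tp P + \rho\,\beta^{-1}\eye$ is positive definite, hence invertible, so solving the linear system for $x_\RPC$ produces exactly \eqref{eq:xRPC}. The main obstacle here is not the calculus but the implicit nature of the result: the scalars $\alpha$ and $\beta$ are themselves functions of the unknown $x_\RPC$, so \eqref{eq:xRPC} is a fixed-point identity that the optimum must satisfy rather than a closed-form expression. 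I expect the write-up to stress this point, since it is precisely the data-dependent, $\alpha,\beta$-coupled regularization matrix that both motivates the one-dimensional search of Section~\ref{sec:efficient} and distinguishes RPC from ordinary ridge regression, whose regularizer is a fixed multiple of $\eye$.
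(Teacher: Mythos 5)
Your proposal is correct and follows essentially the same route as the paper: handle the $A\tp b = 0$ case at the origin, rule out $x=0$ otherwise, and then read off \eqref{eq:xRPC} from the first-order condition $(\|Px\|+\rho\|x\|)\bigl(\alpha^{-1}P\tp P x + \rho\beta^{-1}x\bigr) = A\tp b$. The only cosmetic difference is that you exclude $x=0$ via the (sub)gradient $-A\tp b$ at the origin, whereas the paper exhibits the explicit descent direction $tA\tp b$ with $t^2 = o(t)$; both are valid and the added remarks on coercivity and full column rank of $P$ are harmless extras.
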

\begin{proof}
The theorem follows the first-order optimality conditions. The function \eqref{eq:rpc_nice} is everywhere convex, and differentiable everywhere except at $x = 0$. We can show the solution $x=0$ is only optimal if $A\tp b = 0$. The objective at $x=0$ is $0$. If $A\tp b \neq 0$, then for sufficiently small $t>0$, the point $t A\tp b$ gives a strictly negative objective (since $t^2=o(t)$ as $t\rightarrow 0$), hence $x=0$ is not optimal. If $x\neq 0$, the following first-order conditions are necessary and sufficient:
\[
    0 = ( \|Px\|+\rho\, \|x\|)\left( \frac{P\tp P x}{\|Px\|} + \rho\frac{x}{\|x\|} \right) - A\tp b,
\]
from where we derive \eqref{eq:xRPC}.  The theorem follows directly from setting $\alpha$ and $\beta$ to the required values.
\end{proof}

Theorem~\ref{thm:structure} shows that the optimal solution to the robust partially-compressed least-squares problem is structurally similar to a ridge regression solution. The two main differences are that there are two parameters, $\alpha$ and $\beta$, and these parameters are data-dependent. When setting $\rho$ to 1---which is what we have done in our empirical study, one advantage over ridge regression would be that there is no need to fine-tune the regularization parameter, $\mu$, and one can rely on only data-driven parameters $\alpha$ and $\beta$. Even when there is a need to fine-tune the free parameter $\rho$ in RPC---which we have not done in our results and simply sat $\rho$ to be equal to 1---$\rho$ has a structural meaning associated with it; $\rho$ is the size of the uncertainty set in~\eqref{eq:RPC} and~\eqref{eq:wc-resid} and one can quickly build an intuition behind how to set its value, which is not the case for the regularization parameter $\mu$. In a current investigation, which is out of the scope of this paper, we are building connections between $\rho$ and the compression dimension $m$, which will enable us to appropriately set $\rho$ as a function of $m$.

Note that Theorem~\ref{thm:structure} does not provide a method to calculate $x_\RPC$, since $\alpha$ and $\beta$ depend on $x_\RPC$. However, given that \eqref{eq:rpc_nice} is a convex optimization problem, we are able to reformulate it as the following second-order conic program~(SOCP) in standard form:
\begin{mprog} 
\label{eq:rpc_socp}
\minimize{x,t,u,z} \frac{1}{2} z - b\t A x
\stc \| P x \| \le t~, \quad \rho \, \| x \| \le u~, \quad
 \left\| \begin{matrix} t + u \\ z - \frac{1}{4} \end{matrix} \right\| \le z + \frac{1}{4}~.
\end{mprog}
The last constraint in this program translates to $z \ge (t+u)^2$.

While efficient polynomial-time algorithms exists for solving SOCP problems, they are typically significantly slower than solving least-squares. Therefore, to achieve practical speedup, we need to derive a more efficient algorithm. In fact we propose a reduction to a one-dimensional optimization problem in Section~\ref{sec:efficient}.

\section{Efficient Computation of RPC} \label{sec:efficient}

\newcommand{\dual}{\tau}		
\newcommand{\dualPrime}{\tau'}
\newcommand{\lag}{\mathcal{L}}
\newcommand{\h}{h_{\dual}}

As Section~\ref{sec:partial_definition} shows, the RPC optimization \eqref{eq:rpc_nice} represents a convex second-order conic problem. However, simply using convex or SOCP solvers is too slow when applied to large problems. In this section, we describe a faster approach based on a reduction to a one-dimensional search problem.

\IncMargin{1em}
\begin{algorithm}
\begin{small}
\KwIn{$A$, $b$, $\Phi$, $P = \Phi A$, $\rho$}
\KwOut{$x$}
$U \, \Sigma \, V\tp \leftarrow \operatorname{SVD}(P)$\;
$\tau \leftarrow \rho \, \|b\|_2 / 2$  \tcp*{Initialization}
\tcp{Solve $x \leftarrow \argmin_x h_{\tau_k}(x)$ }
\While{$\left|\, \|\Sigma y \| \gamma_k - 1 \right| \le \epsilon$}{
$\gamma_k \leftarrow  \arg\min_\gamma \phi(\gamma) = \sum_{i=1}^N  \frac{\bar{b}_i^2}{ \left(\gamma \sigma_i^2 + \rho \right)^2} - 1$ 
$y_k \leftarrow \frac{1}{\tau} V\tp (P\tp P + \gamma_k \eye)^{-1} A\tp b $ \;
\tcp{When $\tau=\tau^\star$ then $\alpha = \|\Sigma y \|$} 
 \label{ln:mystery_update} 
$\tau_{k+1} \leftarrow \tau_k \|\Sigma y_k \| \, \gamma_k $ \;
}

\tcp{Recover the solution}
$\alpha \leftarrow \frac{\tau}{1 + \rho \gamma^\star}$ \tcp*{Using: $\alpha + \rho \, \beta = \tau$}
$\beta \leftarrow \frac{\tau - \alpha}{\rho}$ \;
$x \leftarrow \frac{1}{\beta} V y$ \;
\end{small}
\caption{Efficient Algorithm for Solving RPC} \label{alg:superfast}
\end{algorithm}

First, re-formulate the optimization problem \eqref{eq:rpc_nice}  as:
\begin{equation}
 \min_{x,t} ~ \frac{1}{2} t^2 - b\tp A x \quad \text{s.t.} \quad
 \|Px\|+\rho \|x\| \le t
    \label{eq:optimize_t}
\end{equation}
Our goal is to derive and then solve the dual problem. 
The Lagrangian of \eqref{eq:optimize_t} is
\[ \lag(x,t,\dual) =  \frac{1}{2}t^2 - b\tp A x + \dual \left( \|Px\| + \rho \|x\| - t \right) \]
Since strong duality conditions hold, we solve the one-dimensional dual maximization problem $\max_{\tau\ge 0} g(\tau)$ where $g(\tau)$ is given as
\begin{align}
    &\min_t \left(  \frac{1}{2}t^2 - \dual t \right) + \min_x \, \dual\left(\|Px\|+\rho\|x\|\right) - b\tp A x \notag \\
             &= -\frac{1}{2}\dual^2 + \min_x \, \underbrace{\dual\left(\|Px\|+\rho\|x\|\right) - b\tp Ax}_{\h(x)}~. \label{eq:ht}
\end{align}
The second equality follows since $\|Px\|+\rho\|x\| = t = \tau$ for the optimal primal and dual solution. Observe that $h_\tau(x)$ is positive homogeneous in $x$ and therefore:
\begin{equation}
\min_x\; \h(x)= \begin{cases}
        -\infty & \dual < \dual^\star \quad\text{(Case 1)}  \\
        0 & \dual = \dual^\star \quad\text{(Case 2)}\\
        0 & \dual > \dual^\star\quad\text{(Case 3)}
    \end{cases}
    \label{eq:cases}
\end{equation}
where $\dual^\star \ge 0$ is the optimal dual value. 

Intuitively, to solve for the optimal solution, we need to find the maximal value of $\tau$ such that $h_\tau(x) = 0$. Appendix~\ref{appx:algorithm} derives the approach that is summarized in Algorithm~\ref{alg:superfast}. Observe that the function $h_\tau(x)$ is convex. The main idea is to reduce the optimization to a single-dimensional minimization and solve it using Newton method. We also use the SVD decomposition of $P$ to make the search more efficient so that only a single $\order(N^3)$ step is needed.


In terms of the computational complexity, Algorithm~\ref{alg:superfast} requires $\order(m N^2 + N^3)$ operations to compute the singular value decomposition and to multiply $P\tp P$. All operations inside of the loop are dominated by $\order(N^3)$. The number of iteration that is needed depends on the desired precision. Table~\ref{tab:complexity} compares the asymptotic computational complexity of the proposed robust partial compression with the complexity of computing the full least-squares solution.

\begin{table*}
\centering
\begin{small}
\begin{tabular}{l|llll}
\toprule
 & \textbf{Least Squares} & \multicolumn{3}{c}{\textbf{Robust Partial Compression}} \\ 
\midrule
Compression &  & \multicolumn{1}{c}{\textsl{Gaussian}} & \multicolumn{1}{c}{\textsl{Walsh-Hadamard}}  &  \multicolumn{1}{c}{\textsl{Counting}} \\
Comp. Time &  & $\order(m \, M \, N)$ & $\order(M \, \log M N)$ & $\order(nnz)$  \\ 
Solution Time & $\order(M \, N^2)$ & $\order(m N^2 + N^3)$ &  $\order(m N^2 + N^3)$  & $\order(m N^2 + N^3)$  \\ 
\textbf{Total Time} & $\order( M \, N^2)$ & $\order(m \, M \, N + m N^2)$ & $\order(M \, \log M N + m N^2)$ & $\order(nnz + m N^2)$ \\
\bottomrule
\end{tabular} 
\end{small}
\caption{Asymptotic computational complexity of various compression methods. Symbol $nnz$ denotes the number of non-zero elements in $A$ we are assuming that $m \gg N$ and $M \gg N$.    
    } \label{tab:complexity}
\end{table*}

\section{Approximation Error Bounds} \label{sec:errors}

Compressing the least-squares problem can significantly speed up the computation, but it is also important to analyze the quality of the solution of the compressed solution. Such analysis is known for the fully compressed least-squares problem (e.g. \cite{Pilanci-Wainwright}) and in this section, we derive bounds for the partially-compressed least-squares regression. 

First,  the following simple analysis elucidates the relative trade-offs in computing full or partial projection solutions. Let $x\opt$ be the solution to the full least-squares problem \eqref{eq:PCLS} and $z\opt = b - A x\opt$ be the residual. Recall that $A\tp z\opt = 0$. Now when $x_\CLS$ is the solution to \eqref{eq:x-CLS}, then:
\begin{align}
\nonumber
 x_\CLS &= ( A\tp \Phi\tp \Phi A)^{-1} A\tp \Phi\tp \Phi b \\
 \nonumber
 &= x\opt + ( A\tp \Phi\tp \Phi A)^{-1} A\tp \Phi\tp \Phi z\opt
\end{align}
 On the other hand, the solution $x_\PCLS$ to \eqref{eq:x-PCLS} satisfies:
\[
 x_\PCLS = ( A\tp \Phi\tp \Phi A)^{-1} A\tp  b = ( A\tp \Phi\tp \Phi A)^{-1} A\tp  A x\opt 
\]
The error in $x_\CLS$ is additive and is a function of the remainder $z\opt$. The error in $x_\PCLS$ is, on the other hand, multiplicative and is independent of $z\opt$. As a result, a small $z\opt$ will favor the standard fully compressed least-squares formulation, and a large $z\opt$ will favor the new partial compressed one.

We will now show that, in the sense of the following definition, the residual of the optimal solution of the partial projection problem is close to the residual true solution of the least-squares problem.
\begin{defn}[$\epsilon$-optimal solution]
We say that a solution $\hat{x}$ is $\epsilon$-optimal if it satisfies
\begin{equation} \label{eq:epsilon-optimal}
\frac{\|A(\hat{x} - x_\LS)\|}{\|A x_\LS\|} \leq \epsilon, \quad \epsilon \in (0,1)
\end{equation}
where $x_\LS$ is an optimal solution of the original high-dimensional system \eqref{eq:x-LS}.
\end{defn}
For sub-Gaussian and ROS sketches, we can show that results in \cite{Pilanci-Wainwright} can be extended to bound approximation errors for partially-compressed least-squares based on the definition of $\epsilon$-optimal above. 
These results are nearly independent of the number of rows $M$ in the data matrix (except for how these affect $\|Ax_\LS\|$). 
The main guarantees for unconstrained least-squares are given in the following theorem that provides an exponential tail bound: 
\begin{thm}[Approximation Guarantee] \label{thm:approx-error-bounds}
Given a normalized sketching matrix $\Phi \in \R^{m \times M}$, and universal constants $c_0,c_0', c_1, c_2$, the sketched solution $x_{PCLS}$ \eqref{eq:x-PCLS} is $\epsilon$-optimal \eqref{eq:epsilon-optimal} with probability at least $1 - c_1\exp(-c_2\,m\,\epsilon^2)$, for any tolerance parameter $\epsilon \in (0,1)$,  \underline{when} the sketch or compression size $m$ is bounded below by
\begin{itemize}
\item[(i)]  $m > c_0 \frac{rank(A)}{\epsilon^2}$, if $\Phi$ is a scaled  sub-Gaussian sketch 
\item[(ii)] $m > c'_0 \frac{rank(A)}{\epsilon^2}\log^4(N)$, if $\Phi$ is a scaled randomized orthogonal systems (ROS) sketch
\end{itemize}
\end{thm}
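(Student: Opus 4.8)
The plan is to reduce the relative prediction error to a subspace-embedding distortion of the column space of $A$, and then to import the sketch-specific concentration estimates of \cite{Pilanci-Wainwright}. First I would introduce the thin SVD $A = U\Sigma V\tp$ with $U \in \R^{M\times r}$ having orthonormal columns, $\Sigma \in \R^{r\times r}$ invertible, and $r = \operatorname{rank}(A)$, and set $S = \Phi U \in \R^{m\times r}$. Substituting into \eqref{eq:x-PCLS} and using the normal equations $A\tp A\,x_\LS = A\tp b$, a short computation gives $A\,x_\PCLS = U(S\tp S)^{-1}w$ and $A\,x_\LS = Uw$, where $w = \Sigma V\tp x_\LS$. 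Since $U$ has orthonormal columns,
\[
\frac{\|A(x_\PCLS - x_\LS)\|}{\|A\,x_\LS\|} = \frac{\|[(S\tp S)^{-1} - \eye]\,w\|}{\|w\|} \le \|(S\tp S)^{-1} - \eye\|,
\]
so it suffices to show that $\Phi$ acts as a good embedding of the $r$-dimensional subspace $\operatorname{range}(U)$.

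Next I would translate the operator-norm deviation of the inverse into a deviation of $S\tp S$ itself. Because $\E[\Phi\tp\Phi] = \eye$ and $U\tp U = \eye$, we have $\E[S\tp S] = \eye$; moreover, whenever $\|S\tp S - \eye\| \le \delta < 1$, the identity $(S\tp S)^{-1} - \eye = (S\tp S)^{-1}(\eye - S\tp S)$ yields $\|(S\tp S)^{-1} - \eye\| \le \delta/(1-\delta)$, so choosing $\delta \le \epsilon/2$ drives the ratio above below $\epsilon$. The remaining task is the two-sided embedding bound $(1-\delta)\|z\|^2 \le \|\Phi U z\|^2 \le (1+\delta)\|z\|^2$ for all $z \in \R^r$, equivalently $\|S\tp S - \eye\| \le \delta$. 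For a scaled sub-Gaussian $\Phi$, matrix concentration over the fixed subspace $\operatorname{range}(U)$ gives this with probability at least $1 - c_1\exp(-c_2 m\delta^2)$ as soon as $m$ exceeds a constant multiple of $r/\delta^2$; for a scaled ROS sketch the same estimate holds at the cost of an additional $\log^4(N)$ factor in the sample size. Taking $\delta$ proportional to $\epsilon$ and absorbing constants then reproduces the two stated lower bounds on $m$ together with the failure probability $c_1\exp(-c_2 m\epsilon^2)$.

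The algebraic reduction is clean, and notably simpler than the fully compressed case analyzed in \cite{Pilanci-Wainwright}: there the error $(A\tp\Phi\tp\Phi A)^{-1}A\tp\Phi\tp\Phi\,z\opt$ carries a cross term coupling the sketch to the residual $z\opt$, whereas here no such term survives and the whole bound collapses onto the single quantity $\|S\tp S - \eye\|$. Consequently the main obstacle is the sketch-specific concentration in the final step, i.e.\ establishing $\|S\tp S - \eye\| \le \delta$ uniformly over $\operatorname{range}(U)$. For sub-Gaussian rows this is a routine covering/net argument, but the ROS case is delicate: the deviation must be controlled through a chaining and union-bound analysis that accounts for the incoherence of the orthonormal basis $U$, which is precisely what produces the $\log^4(N)$ overhead and the part of the argument that requires the most care to adapt from \cite{Pilanci-Wainwright}.
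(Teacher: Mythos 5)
Your argument is correct, but it takes a genuinely different route from the paper. The paper does not touch the closed-form expression for $x_\PCLS$ at all: it adapts the variational (``basic inequality'') argument of Lemma 1 in \cite{Pilanci-Wainwright}, combining the first-order optimality of $x_\PCLS$ for \eqref{eq:PCLS} with the normal equations for $x_\LS$ to obtain $\tfrac{1}{2}\|\Phi A(x_\PCLS-x_\LS)\|^2 \le \langle A(x_\PCLS-x_\LS),\,(I-\Phi\tp\Phi)A x_\LS\rangle$, and then controls the two normalized quantities separately: a restricted smallest-singular-value term $Z_1=\inf_{v\in\mathrm{range}(A),\|v\|=1}\|\Phi v\|^2$ and a bilinear term $Z_2=\sup_{v}|\langle u,(\Phi\tp\Phi-I)v\rangle|$ against the single \emph{fixed} direction $u=Ax_\LS/\|Ax_\LS\|$, which is exactly what Lemmas 2--5 of \cite{Pilanci-Wainwright} bound off the shelf. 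You instead exploit the explicit formula $x_\PCLS=(A\tp\Phi\tp\Phi A)^{-1}A\tp b$ and the thin SVD to collapse the relative error onto $\|(S\tp S)^{-1}-\eye\|$ with $S=\Phi U$; the algebra is right (indeed $w=U\tp b$, $Ax_\LS=Uw$, $Ax_\PCLS=U(S\tp S)^{-1}w$) and the reduction is arguably cleaner and more transparent about why no residual term survives. The trade-off is that your route demands the full two-sided subspace embedding $\|S\tp S-\eye\|\le\delta$, i.e.\ uniform control of the bilinear form over \emph{all pairs} of directions in $\mathrm{range}(A)$, which is a strictly stronger concentration event than the paper's $Z_1$ plus fixed-direction $Z_2$; it holds under the same scaling $m\gtrsim \mathrm{rank}(A)/\epsilon^2$ (with the $\log^4 N$ overhead for ROS), but it is not literally the statement of the cited lemmas, so the final step needs a genuine subspace-embedding result rather than a direct citation. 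The paper's variational route also extends to constrained versions of the problem, where no closed form exists, whereas yours does not.
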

See Appendix~\ref{app:error_proof} for the proof.

By ``scaled'' sketch, we mean $\E(\Phi\tp\Phi)=I$, since for partial compression, scaling $\Phi$ does affect the answer, unlike full compression. For example, in the Gaussian case, we draw the entries of $\Phi$ from $\mathcal{N}(0,\frac{1}{m})$ instead of $\mathcal{N}(0,1)$.


\section{Empirical Results} \label{sec:experiments}

\begin{table*}
\centering
\begin{small}
\begin{tabular}{ll}
\toprule
 Least Squares variant &  Objective \\ 
\midrule
Least Squares &  $ \min_x \; \frac{1}{2} \, \|A \, x - b\|^2 $ \\ 
Compressed LS (i.e., \emph{full} compression) &  $ \min_x\; \frac{1}{2} \, \|\Phi\,( A \, x - b ) \|^2 $  \\ 
Regularized Compressed LS &  $ \min_x\; \frac{1}{2} \, \|\Phi\,( A \, x - b ) \|^2  + \frac{\mu}{2} \| x \|^2$  \\ 
Robust Compressed LS &  $ \min_x \, \max_{\|[\Delta P,\Delta B] \|_F \leq \rho}\; \frac{1}{2} \, \|(\Phi A+\Delta P) \, x - (\Phi + \Delta B)b ) \|^2  $  \\ 
Partial Compressed LS & $ \min_x\; \frac{1}{2} \, \|\Phi\,A \, x\|^2 - b^T  A \, x  $ \\ 
Regularized Partial Compressed LS & $ \min_x \;\frac{1}{2} \, \|\Phi\,A \, x\|^2 - b^T A \, x + \frac{\mu}{2} \| x \|^2 $ \\ 
Robust Partial Compressed LS &  $ \min_x\, \max_{\|\Delta P \|_F \leq \rho} \frac{1}{2} \, \|(\Phi\,A+\Delta P) x\|^2 - b^T  A \, x $ \\
BLENDENPIK & $ \min_x \; \frac{1}{2} \, \|A x - b\|^2 $ via preconditioned LSQR \cite{avron2010blendenpik} \\ 
\bottomrule
\end{tabular} 
\end{small}
\caption{Methods used in the empirical comparison} \label{tb:methods}
\end{table*}

Our focus in this section is on the improvement of the solution error in comparison with the non-compressed least squares solution and the improvement over regular full compression. We also investigate the computational speed of the algorithms and show that partial compression is just as fast as full compression (and hence sometimes faster than standard least-squares), and that robust partial compression is only roughly twice as slow (and asymptotically it is the same cost).

Table~\ref{tb:methods} summarizes the methods that we consider for the empirical evaluation. All methods were implemented in MATLAB with some of the random projection code implemented in C and multithreaded using the pthread library; all experiments were run on the same computer. For solving ordinary least squares, we consider two methods: 1) directly use Matlab's least-square solver $A \backslash b$ (which is based on LAPACK), and 2) solving the normal equations: $(A^T A) \backslash b$. The latter method is significantly faster but less numerically stable for ill-conditioned matrices.

For completeness, we compare with (ridge-)regularized and robustified versions of the standard compressed LS problem~\eqref{eq:x-CLS}. The robust version is solved following the algorithm outlined in \cite{elGhaoui-lebret-1997} since this can be treated as a robust  ordinary least squares problem.

We first investigate accuracy using a data set
from the National Health Interview Survey from 1992, containing 44085 rows and only 9 columns; since it is highly overcomplete and contains potentially sensitive data, it is a good candidate for sketching. To test over this, we do 100 realizations of 5000 randomly chosen training data and 10000 testing data, and for each realization draw 50 random Walsh-Hadamard sketches with $m=10N$.
The residual on the testing data (median over all $5000$ realizations) is shown in Fig.~\ref{fig:NHIS}. For robust variants, we set $\mu$ to be 5 times the minimum eigenvalue of $A\tp\Phi\tp\Phi A$, and for robust variants we set $\rho=1$. 

The figure presents the results similar to a CDF or a ``performance profile'' as used in benchmarking software; a smaller area above the curve indicates better performance. A point such as $(0.5,1.02)$ means that on half the simulations, the method achieved a residual within a factor of $1.02$ of the least-square residual. 

\begin{figure}
		\centering
	        \includegraphics[width=0.65\linewidth]{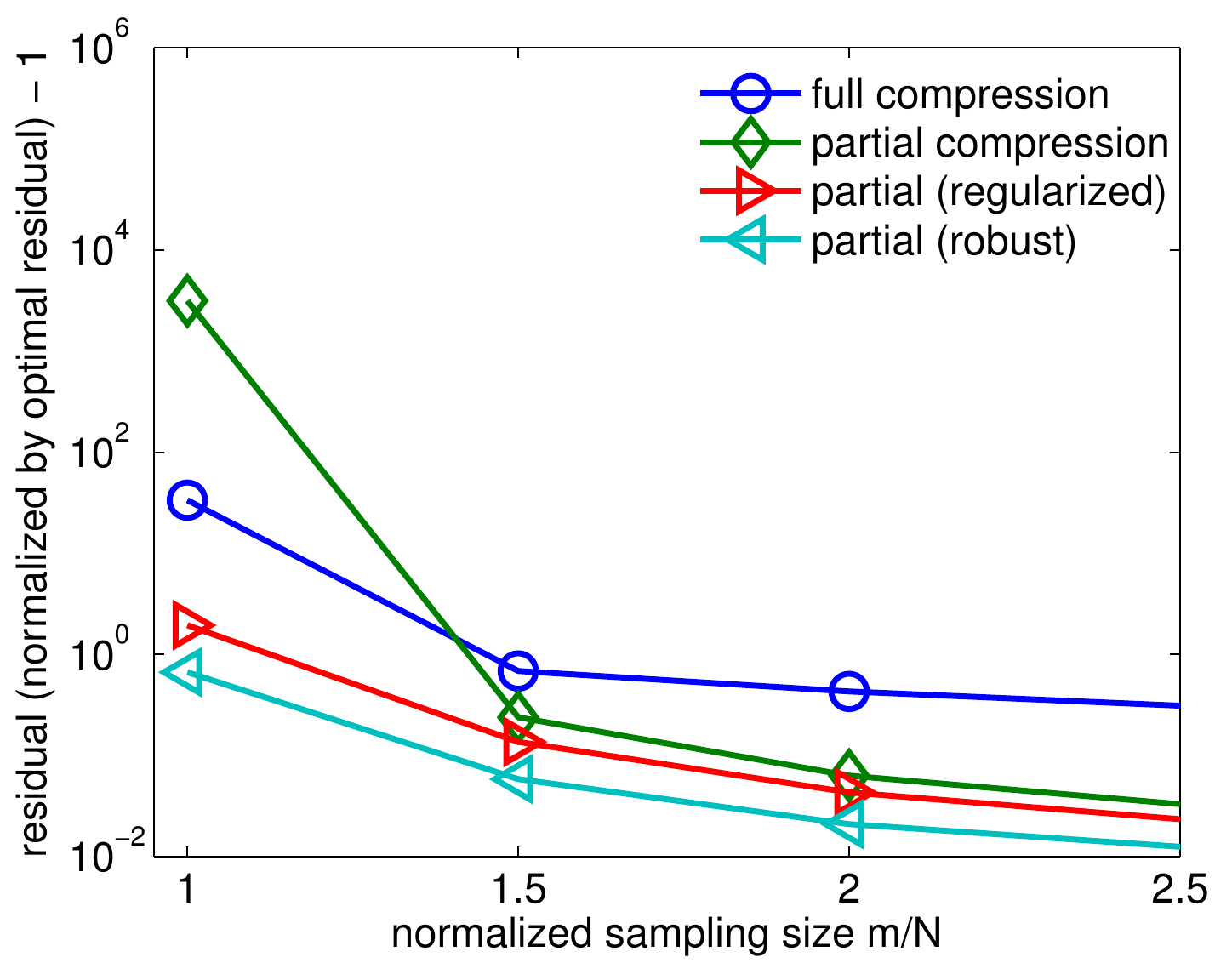}
		\caption{For very high compression ($m/M$ very small), with even $m=N$, robustness/regularization is beneficial.}       \label{fig:smallGamma}     
\end{figure}

There are two clear observations from the figure:  partial compression gives lower residuals than full compression, and the regularized and robust variants may do slightly worse in the lower-left (i.e., more bias) but better in the worst-case upper-right (i.e., less variance).
Put another way, the robust and regularized versions have stronger tail bounds than the standard versions. We also see a slight benefit of robustness over regularization, though the effect depends on how $\mu$ and $\rho$ are chosen.



\begin{figure*}[ht]
	\centering
		\includegraphics[width=0.4\linewidth]{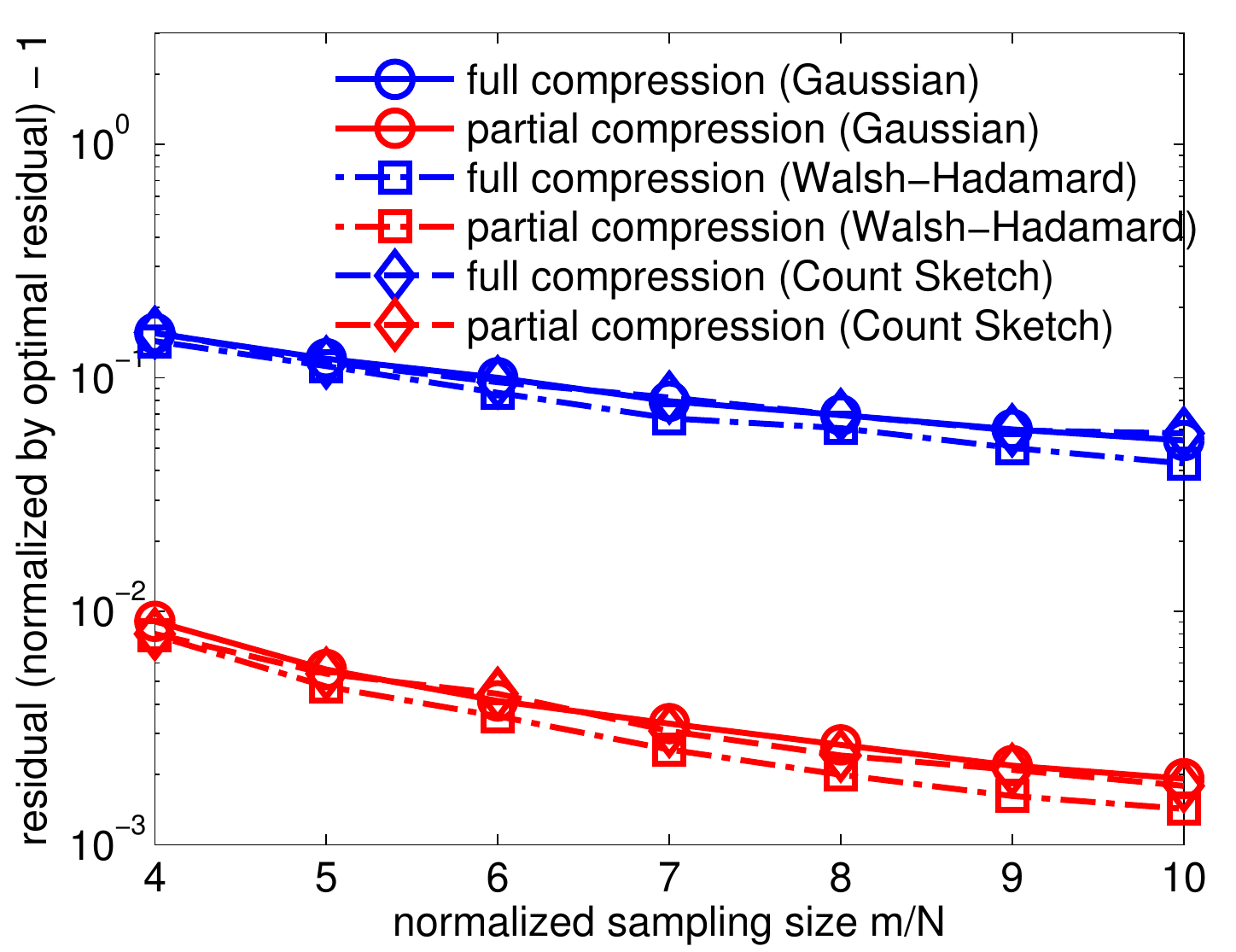}
		\hspace{0.7in} 
		\includegraphics[width=0.4\linewidth]{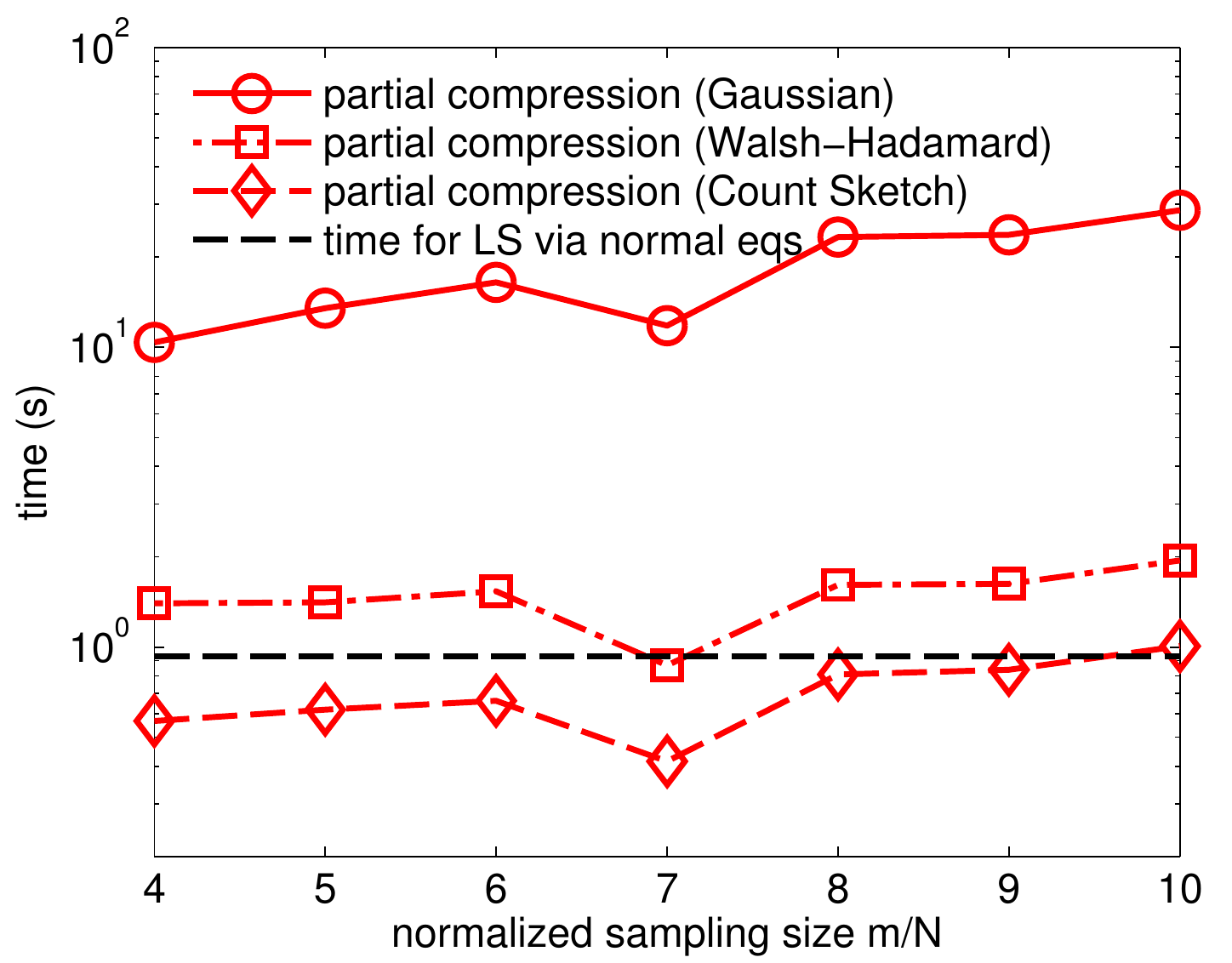} 
	\caption{
	Accuracy (left) and speed (right) for partial/full compressed least squares for three types of sketching matrices as a function of the amount of compression. Matrix dimensions were $30000\times 750$.
	}
	\label{fig:speedTest3_smallersize}
\end{figure*}

	\begin{figure*}[ht]
	\centering
		\includegraphics[width=0.4\linewidth]{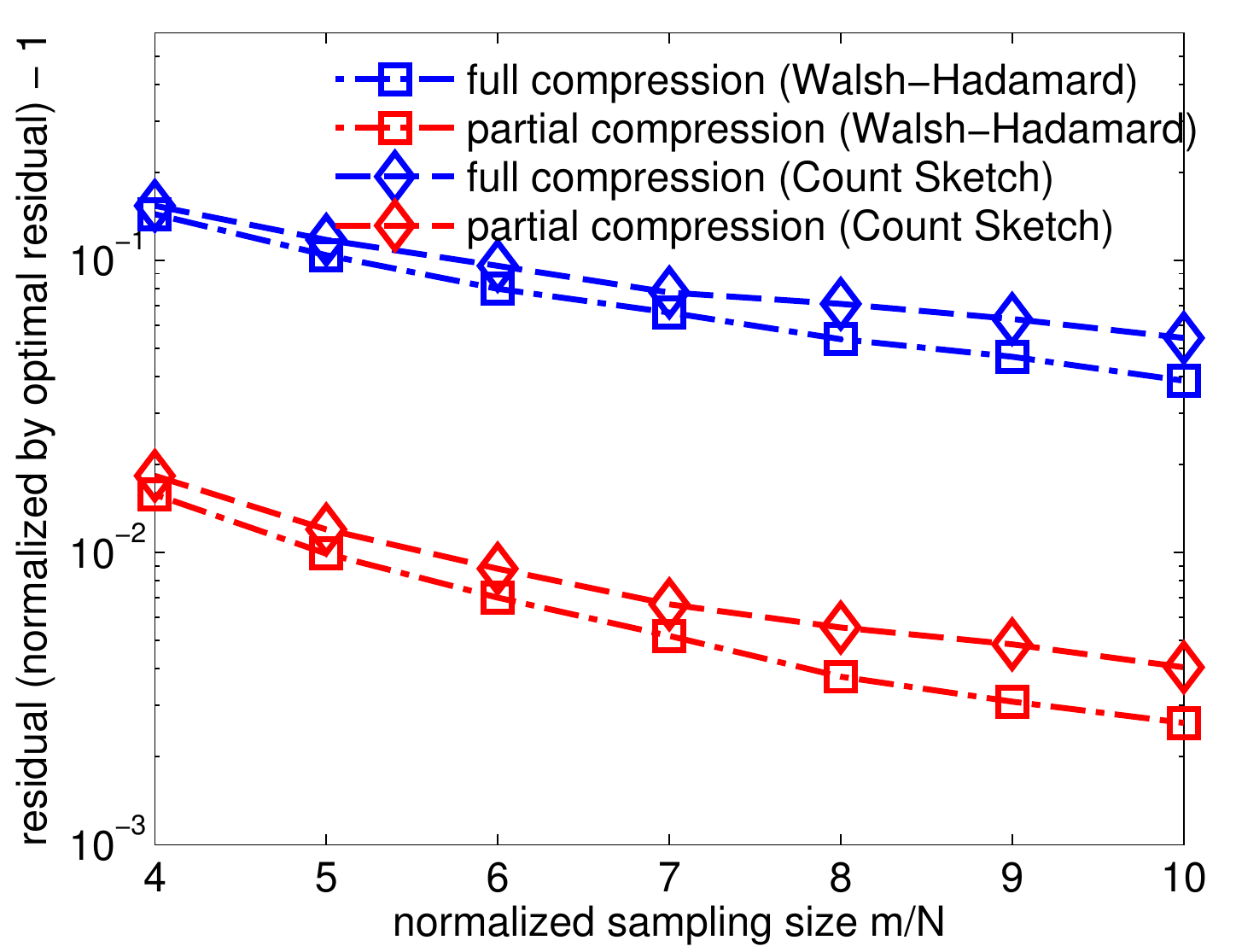}
				\hspace{0.7in} 
		\includegraphics[width=0.4\linewidth]{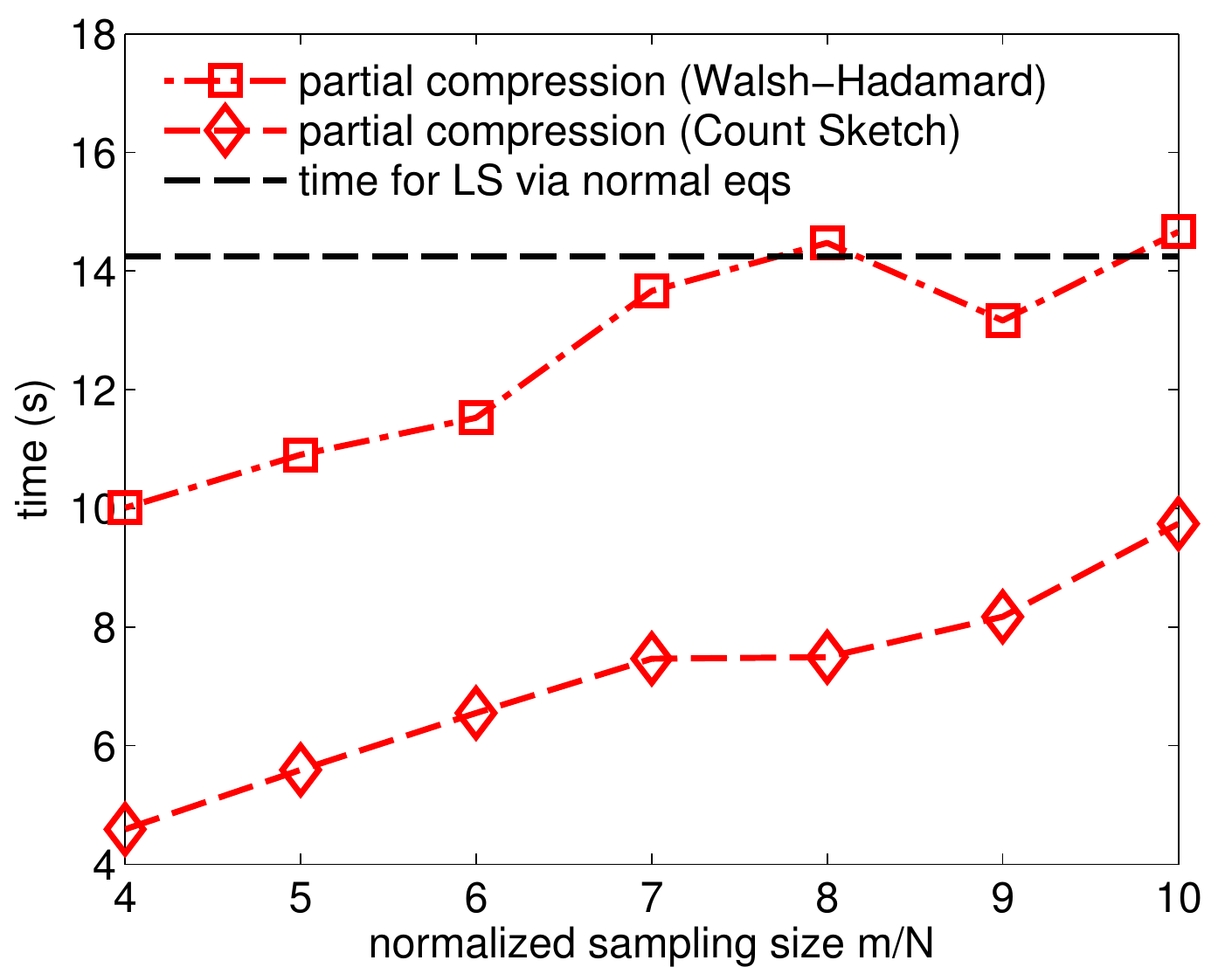}
	\caption{
	Accuracy (left) and speed (right) for partial/full compressed least squares for two types of sketching matrices as a function of the amount of compression.  Matrix dimensions were $40000\times 2000$. The matrix was too large for Gaussian compression. 
	}
	\label{fig:speedTest3_largersize}
\end{figure*}
	
The next dataset represents a set of synthetically generated matrices used in \cite{avron2010blendenpik}. The authors consider three distinct types of matrix based on their coherence: 1) incoherent, 2) semi-coherent, and 3) coherent. 
Figure~\ref{fig:speedTest3_smallersize} shows the results on a $30000 \times 750$ \emph{incoherent} matrix for all three types of matrix compression. The data matrix has a condition number $10^4$. To compare multiple methods, we use the same compression matrix to evaluate each methods. The accuracy of a solution $\widehat x$ is defined with respect to the residual of $x_\LS$, namely $\left(\|A\widehat{x}-b\|/\|A x_\LS-b\|\right)-1$.

The results in Figure~\ref{fig:speedTest3_smallersize}  illustrate that the partial compression significantly improves on the residual of the solution in comparison with using full compression. In terms of the timing results, the computation time is dominated by the random projection. As a result, only the fastest projection based on the count sketch significantly outperforms ordinary least squares. Figure~\ref{fig:speedTest3_largersize} shows the same results for a larger matrix with dimensions $40000\times 2000$. Now that the dimensions are larger, the Walsh-Hadamard sketch is also faster than ordinary least-squares.  
Overall the results show that the partial compression method can significantly improve on the accuracy in comparison with full compression while significantly reducing the computation time with respect to ordinary least-squares.

\begin{figure}
    \centering
    	\includegraphics[width=0.55\linewidth]{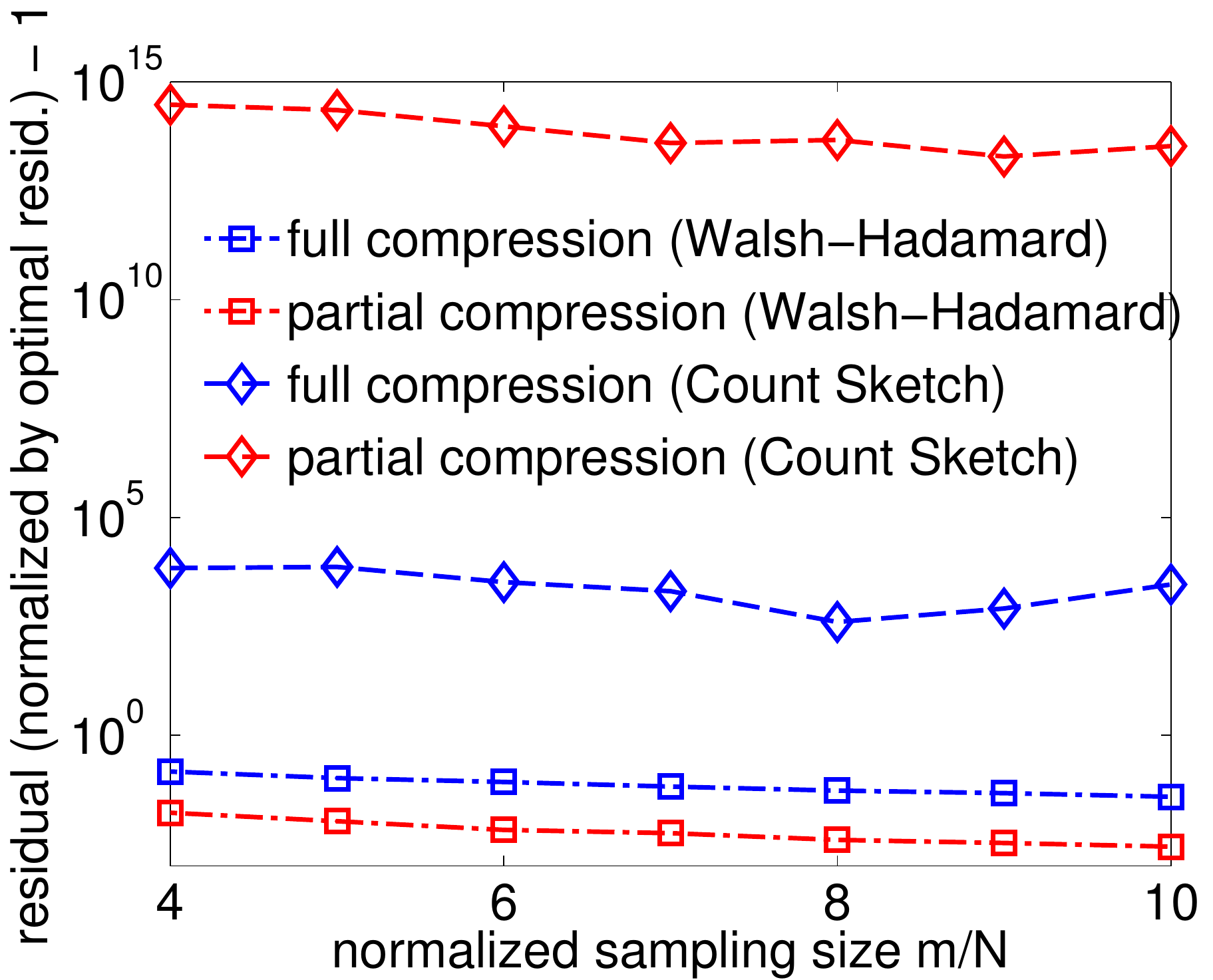}
	\caption{
	Accuracy of partial/full compressed least squares for ``semi-coherent'' matrices.
		}		\label{fig:speedTest3_coherent}	
\end{figure}
\begin{figure}
	\centering
		\includegraphics[width=0.55\linewidth]{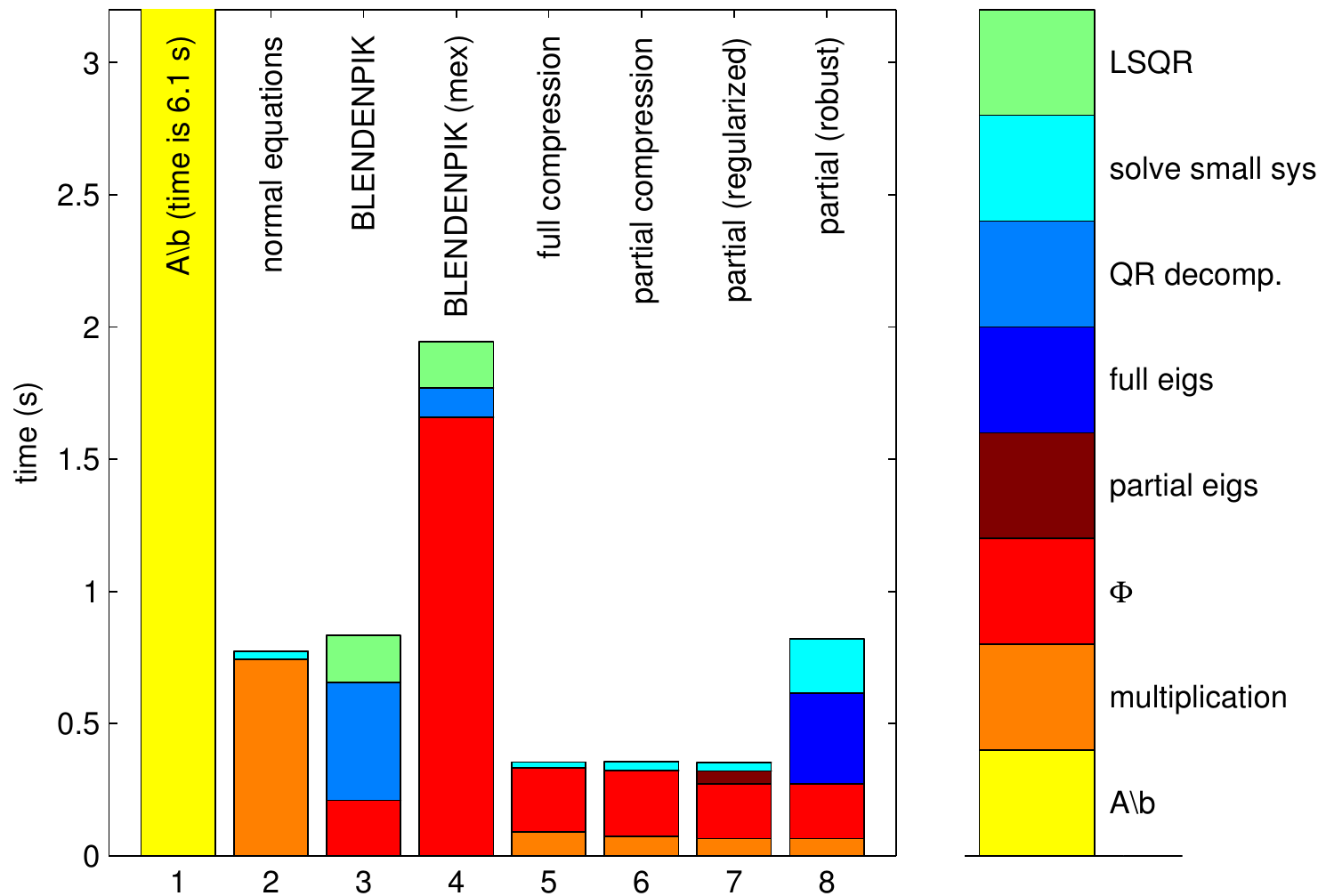}
		\caption{Breakdown of the timing of the parts of the algorithms. Note that the time
for $A \backslash b$ continues off the chart.		
		} \label{fig:breakdown}
\end{figure}

The timing results on the semi-coherent matrix from \cite{avron2010blendenpik} are essentially the same as for the incoherent one, hence we do not show them. However, Figure~\ref{fig:speedTest3_coherent} shows that partial compression combined with the count sketch has a significantly greater error. This is not unexpected since the semi-incoherent matrix is sparse (and in fact parts of it are nearly diagonal) and the count sketch will preserve most of this sparsity, though we lack a theory to explain it precisely. Note that our approximation guarantees were only shown for the Walsh-Hadamard and Gaussian sketch.
The semi-coherence of the matrix does not affect Walsh-Hadamard and Gaussian sketches, in agreement with our theorem.

Figure~\ref{fig:breakdown} shows the breakdown of timing for the individual parts of each of the algorithms that we consider. The compression method used the counting sketch in all compressed methods with the exception of blendenpik (mex) which used the Walsh-Hadamard random matrix vis the sprial WHT package, and both blendenpik versions are set to use low-accuracy for the LSQR step. The matrix $A$ is $5\cdot 10^4\times 500$ random matrix with condition number $10^{6}$.

Finally, Fig.~\ref{fig:smallGamma} investigates the regime when $m$ is very small. The partial and fully compressed solutions greatly deteriorate as $m \rightarrow N$, whereas the robust and regularized versions still give more useful solutions.

\section{Conclusion}

We introduced \emph{partially-compressed} and \emph{robust partially-compressed} least-squares linear regression models. The models reduce the error introduced by random projection, or sketching, while retaining the computational improvements of matrix compression. The robust model specifically captures the uncertainty introduced by the partial compression, unlike ordinary ridge regression with the partially compressed model. Our experimental results indicate that the robust partial compression model out-performs both partially-compressed model (with or without regularization) as well as the fully compressed model. Partial compression alone can also significantly improve the solution quality over full compression.

While the partially-compressed least-squares retains the same computational complexity as full compression, the robust approach introduces an additional difficulty in solving the convex optimization problem. By introducing an algorithm based on one-dimensional parameter search, even the \emph{robust partially-compressed} least-squares can be faster than ordinary least-squares.



\bibliographystyle{plain}
\begin{small}
\bibliography{tempbibfile}
\end{small}

\appendix

\section{Minimization of $h_\tau$ in Algorithm~\ref{alg:superfast}} \label{appx:algorithm}

In this section, we describe how to efficiently minimize $h_\tau(x)$. Recall that $h_\tau$ is defined in \eqref{eq:ht}. 
Now consider the problem of computing $\argmin \h(x)$ for a fixed $\dual$. In case 2 of \eqref{eq:cases}, there exists a solution $x \neq 0$ and therefore the function is differentiable and the optimality conditions read
\begin{equation}
    \dual\left( \frac{P\tp P}{\alpha} + \rho\frac{I}{\beta} \right)x = A\tp  b, \quad \alpha=\|Px\|,\; \beta =\|x\|~.
    \label{eq:opt_x}
\end{equation}
The optimality conditions are scale invariant for $x\neq 0$ and therefore we can construct a solution such that $\beta = \|x\| = 1$.

Let $V D V\tp =P\tp  P$ be an eigenvalue decomposition of $P\tp P$, i.e., $D_{ii}=d_i=\sigma_i^2$ are the squared singular values of $P$, and $V$ are the right singular vectors of $P=U\Sigma V\tp $. We make the change-of-variables to $y=V\tp x$ (hence $\|y\|=\|x\|$) and define $\overline{b} = V\tp A\tp  b$, which gives an equation for $y$ which is separable if $\alpha$ is fixed.  We thus need to solve
\begin{gather}
    \dual ( \gamma D + \rho I )y = \overline{b} \label{eq:y1} \\
    1 = \beta = \|y\| \label{eq:y2} \\
    1/\gamma = \alpha= \|\Sigma y\|\label{eq:y3}
\end{gather}
Since $d_i \ge 0$ the solution of \eqref{eq:y1} is unique for a given $\gamma$. Therefore, the equations \eqref{eq:y1}-\eqref{eq:y3} are satisfied if and only if there exists a $\gamma$ such that the solution to \eqref{eq:y1} satisfies both \eqref{eq:y2} and \eqref{eq:y3}.

We use Newton's method to compute $\gamma$ that satisfies \eqref{eq:y1}. Define
\[
    \phi(\gamma) = \dual^{-2}\sum_{i=1}^N  \frac{\overline{b}_i^2}{ \left(\gamma \sigma_i^2 + \rho \right)^2} - 1
\]
so \eqref{eq:y1} and \eqref{eq:y2} are satisfied if $\phi(\gamma)=0$ for $\gamma\ge 0$.
We note that 
\[
    \phi'(\gamma)=  -2\dual^{-2}\sum_{i=1}^N  \frac{\sigma_i^2 \overline{b}_i^2}{ \left(\gamma \sigma_i^2 + \rho \right)^3} 
\]
which is always negative when $\gamma \ge 0$, hence $\phi$ is monotonic and we are guaranteed that there is a unique root (i.e., it is analogous to convex minimization) in the region $\gamma \ge 0$. We can apply any variant of safe-guarded Newton style methods to solve for the root.

Let $\overline{x} = V\tp \overline{y}$ for $\overline{y}$ the optimal solution of the Newton method optimization. We now check if \eqref{eq:y3} is satisfied for this particular value of $\gamma \equiv \alpha^{-1}$ to determine which case of \eqref{eq:cases} we are in. If \eqref{eq:y3} is satisfied and $h_\tau(\overline{x}) = 0$ that means that we are in Case 2 and $\dual = \dual^\star$. That is, the complementary slackness conditions are satisfied and the minimum of $h_\tau(x)$ is 0. If, on the other hand, $h_\tau(\overline{x}) < 0$ then we are in Case 1 and scaling $\overline{x}$ yields an arbitrarily small value. Finally, if $\overline{y}$ does not satisfy \eqref{eq:y3}, then the optimal solution is $y = 0$ and we are in Case 3. Note that $h_\tau(x)$ is not differentiable at $x=0$.

Finally, if we are in Case 2, then $\overline{x}$ is a scaled optimal solution. To recover the optimal solution, we use $t=\tau$ to appropriately scale $\overline{x}$. Specifically, since we took $\beta=1$ and worked with $\gamma\equiv\alpha^{-1}$, this was equivalent to working with $\gamma=\beta/\alpha$ so we can recover the properly scaled $\beta^\star = \alpha^\star \gamma$ and hence $\alpha^\star = (1+\rho\gamma)^{-1}\dual^\star$.

\section{Proof of Theorem~\ref{thm:approx-error-bounds}} \label{app:error_proof}

\begin{proof}
	\newcommand{\xx}{\widehat{x}}
	\newcommand{\xopt}{x\opt}
	\newcommand{\<}{\langle}
	\renewcommand{\>}{\rangle}
The proof uses the stochastic arguments of \cite{Pilanci-Wainwright} directly, and modifies their deterministic argument (Lemma 1).
For brevity, write $\xx=x_\PCLS$ and $\xopt = x_\LS$.	
From the optimality of $\xx$ to the partial-compressed least squares problem \eqref{eq:PCLS}, we have:
\begin{equation} \label{eq:xPCLSopt}
\|\Phi A \xx\|^2 \leq \|\Phi A x  \|^2 + 2 \langle A\,(\xx - x), b\rangle.
\end{equation}
for all $x$, and in particular $x=\xopt$.
From the optimality of $\xopt$ to equation \eqref{eq:x-LS}, the gradient at $\xopt$ is zero so we have
$\< A\,x, A \xopt - b \> = 0$
for any $x$, 
and hence, using $x=\xx-\xopt$, 
re-arranging this gives 
\begin{equation}
\label{eq:LSopt}
    \<A(\xx-\xopt),b\> = \< A(\xx-\xopt),A\xopt \>
\end{equation}

Thus
\begin{align*}
\frac{1}{2} \|\Phi A\,&(\xx - \xopt)\|^2 \\ 
&= \frac{1}{2}\|\Phi A \xx\|^2 + \frac{1}{2}\|\Phi A \xopt \|^2 -\<\Phi A \xx, \Phi A \xopt\> \\
&\le \|\Phi A \xopt \|^2 + \langle A\,(\xx - \xopt), b\rangle - \langle\Phi A \xx, \Phi A \xopt\rangle\\
&= \langle A\,(\xx - \xopt), b\rangle - \langle\Phi A \,(\xx - \xopt), \Phi A \xopt\rangle\\
&= \langle A\,(\xx - \xopt), (I-\Phi\tp \Phi)A \xopt \rangle
\end{align*}
where the first inequality follows from \eqref{eq:xPCLSopt} and the final equality follows from \eqref{eq:LSopt}.
 Normalizing both sides of the last inequality appropriately, we obtain:
\begin{align*}
& \frac{1}{2} \underbrace{\frac{\|\Phi A\,(\xx - \xopt)\|^2}{\|A\,(\xx - \xopt)\|^2}}_{U_1}\|A\,(\xx - \xopt)\|  \\
& \le 
 \|A \xopt \| \underbrace{\left\langle \frac{A\,(\xx - \xopt)}{\|A\,(\xx - \xopt)\|}, (I-\Phi\tp \Phi)\frac{A \xopt}{\|A \xopt \|} \right\rangle }_{U_2} 
\end{align*}
To complete the proof, we need to show that $2\,\frac{U_2}{U_1}$ is bounded above by $\epsilon \in (0,1)$ for both the sub-Gaussian sketch and the ROS sketch. 
Define $Z_1(A) = \inf_{v\in\text{range}(A),\; \|v\|=1}\; \|\Phi v\|^2$
and
\[Z_2(A) = \sup_{v\in\text{range}(A),\; \|v\|=1} \; \left|\left\langle u, \left(\Phi^T\Phi-I\right)v \right\rangle \right|~, \]
where $u$ is any fixed vector of norm $1$.
Then $U_2/U_1 \le Z_2/Z_1$.

Taking the scaling of $\Phi$ into account, then $Z_2/Z_1 < \epsilon$ if: (a) $\Phi$ is a scaled sub-Gaussian sketch and condition \emph{(i)} of the theorem holds, since we apply Lemmas 2 and 3 of \cite{Pilanci-Wainwright}; or (b) $\Phi$ is a scaled ROS sketch and condition \emph{(ii)} of the theorem holds, since we apply Lemmas 4 and 5 of \cite{Pilanci-Wainwright}.
%
%
\end{proof}

\end{document}